\def\argmin{\mathop{\arg\min}\limits}	%
\newcommand{\crl}[1]{\mathopen{}\left\{#1\right\}\mathclose{}}
\newtheorem{theorem}{Theorem}
\newtheorem{corollary}{Corollary}[theorem]
\newtheorem*{assumption*}{Assumption}
\newtheorem{remark}{Remark}
\newtheorem*{problem*}{Problem}
\def\thetitle{Dense 3-D Mapping with Spatial Correlation via Gaussian Filtering}
\def\theauthor{Ke Sun, Kelsey Saulnier, Nikolay Atanasov, George J. Pappas, and Vijay Kumar}
\begin{document}
\title{\vspace{2ex}\LARGE \bf \thetitle}%
\author{\theauthor% <-this % stops a space
  \thanks{The authors are with GRASP Lab, University of Pennsylvania, Philadelphia, PA 19104, USA, {\tt\small\{sunke, saulnier, atanasov, pappasg, kumar\}@seas.upenn.edu}.}%
  \thanks{We gratefully acknowledge support by TerraSwarm, one of six centers of STARnet, a Semiconductor Research Corporation program sponsored by MARCO and DARPA and the following grants: NSF-IIA-1028009, ARL MAST-CTA W911NF-08-2-0004, ARL RCTA W911NF-10-2-0016, ONR-N000141310778, NSF-DGE-0966142, NSF-IIS-1317788, NSF-IIP-1439681, and NSF-IIS-1426840.}
}
\maketitle

\begin{abstract}
Constructing an occupancy representation of the environment is a fundamental problem for robot autonomy. Many accurate and efficient methods exist that address this problem but most assume that the occupancy states of different elements in the map representation are statistically independent. The focus of this paper is to provide a model that captures correlation of the occupancy of map elements. Correlation is important not only for improved accuracy but also for quantifying uncertainty in the map and for planning autonomous mapping trajectories based on the correlation among known and unknown areas. Recent work proposes Gaussian Process (GP) regression to capture covariance information and enable resolution-free occupancy estimation. The drawback of techniques based on GP regression (or classification) is that the computation complexity scales cubically with the length of the measurement history. Our main contribution is a new approach for occupancy mapping that models the binary nature of occupancy measurements precisely, via a Bernoulli distribution, and provides an efficient approximation of GP classification with complexity that does not scale with time. We prove that the error between the estimates provided by our method and those provided by GP classification is negligible. The proposed method is evaluated using both simulated data and real data collected using a Velodyne Puck 3-D range sensor.

\end{abstract}

\section{Introduction}
\label{sec:introduction}
%, as well as many other existing methods,
% based on range data 
% based on distance measurements 
% Accurate knowledge of the robot's surroundings
% and sensor data is correlated only with directly observed cells. 
 
Constructing a geometric representation of the environment is a fundamental problem in robotics because it is essential for localization, reliable navigation, as well as occlusion and collision checking in tasks such as object recognition and manipulation. Of course, geometric mapping has been studied extensively and there exist many accurate and efficient methods~\cite{occgrid,octomap,kinfu,elastic_fusion,rgbd-slam,Newcombe_DenseVisualSLAM_Phd14,lsd-slam}. While accuracy and efficiency are important requirements, in this paper we focus on an additional requirement that many existing methods do not address. In particular, we focus on maintaining information about covariance of different elements in the map representation. For example, occupancy grid mapping~\cite[Ch.9]{occgrid,ProbabilisticRoboticsBook}, one of the early, very successful approaches to geometric mapping, represents the environment as a regular grid in which each cell is a Bernoulli random variable that captures the probability of the cell being occupied or free. Occupancy grid mapping relies on the assumption that the occupancy states of different grid cells are statistically independent and sensor data is correlated only with directly observed cells. We claim that maintaining a model of occupancy dependence among the cells may improve the accuracy of the resulting maps. However, our main motivation for considering such a model with covariance information is that the correlation between known and unknown areas in the map enables planning informative robot trajectories for autonomous mapping~\cite{active_gp_conference,active_gpmap,Charrow-RSS-15}. More precisely, covariance statistics can be used to quantify the current map uncertainty (e.g., the via Shannon's entropy~\cite{Krause_PhD08,Atanasov_PhD15,Charrow_PhD15}) and to infer the probable occupancy of unobserved locations based on the occupancy of already observed cells.

\begin{figure}[t]
\includegraphics[width=0.49\linewidth]{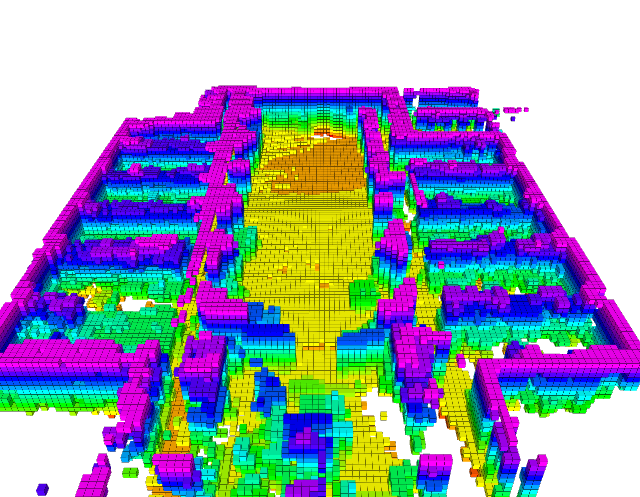}
\includegraphics[width=0.48\linewidth]{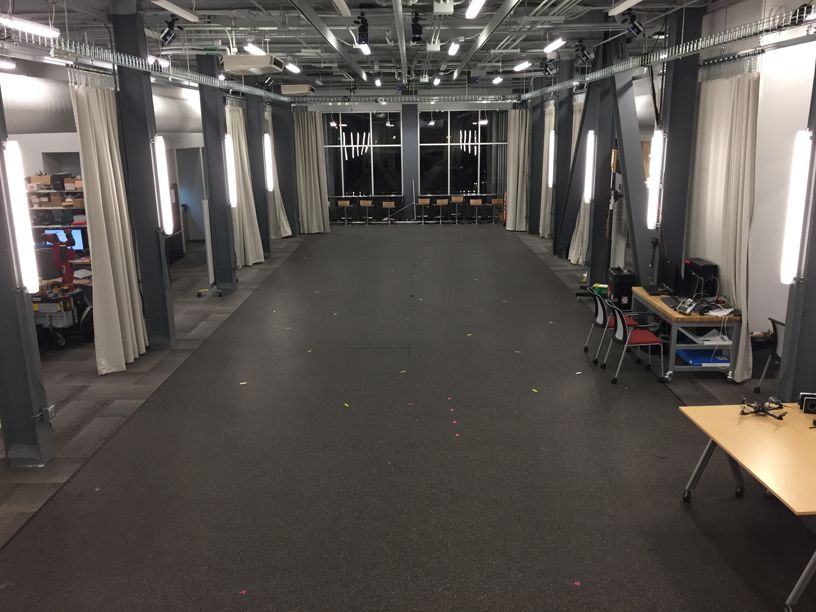}
\caption{A dense 3-D map (left) of a lab environment (right) produced using the occupancy grid filter, one new approach which approximates GP classification accurately with computational complexity that does not scale with time.}
%\caption{In this paper, we introduce a new method, called occupancy grid filter, to solve the Gaussian process occupancy mapping problem. Inheriting from GP classification, the proposed method is able to accurately model the binary nature of occupancy measurements. Also, we prove that occupancy grid filter is an efficient approximation of GP classification with negligible error.}
\label{fig:advertisement  figure}
\end{figure}

The above has been recognized by several recent works~\cite{gpom_conf,gpom,gpmap_integralkernel,hilbert_maps,area_kernels,Kim_BCM,Kim_TRO,Kim2015,Wang_GPRegressionMapping} that propose the use of Gaussian Processes (GPs)~\cite{Rasmussen_2006} for occupancy mapping. In addition to capturing covariance information via a kernel function, these models provide resolution-free mapping in the form of a continuous occupancy function. The challenge for GP methods is to deal efficiently with the hybrid nature of the problem -- discrete measurements (occupied or free) and continuous occupancy function. O'Callaghan et al.~\cite{gpom_conf,gpom,gpmap_integralkernel} proposed a probabilistic least-squares classification method in which a probit\footnote{The probit functions is defined as the culmulative distribution function of a standard normal, i.e. $\Phi(x;0, 1) = \int_{-\infty}^x \phi(u;0,1)du$, where $\phi(u;0,1)$ is the standard normal pdf.} function is used to convert a latent function estimated via GP regression to binary occupancy predictions. A more accurate but computationally demanding mapping method is to use GP classification~\cite[Ch.3.3]{Rasmussen_2006}, in which the latent function posterior is updated from the discrete occupancy measurements. Since the measurement likelihood in this case is not Gaussian, the posterior distribution cannot be analytically determined and has to be approximated with iterative methods. The major drawback of GP regression and classification is that their computational complexity is cubic in the number of measurements. In an online mapping setting where the number of measurements grows with time, applicability of these methods is limited.

%Since the measurement likelihood in this case is not Gaussian, the posterior distribution needs to be approximated via expectation propagation~\cite{expectation_prop} or Laplace approximation~\cite{laplace_approx}. The major drawback of GP regression and classification is their computational complexity of $O(M^3T^3 + M^2T^2N)$, where $MT$ is the number of training examples ($M$ binary occupancy measurements over $T$ time steps) and $N$ is the number of locations in the environment that need to be classified. Since in an online mapping setting, the number of measurements $MT$ grows with time, the applicability of these methods is limited.

Many techniques~\cite{GP_direct,sparseGP,Anderson2015} focus on improving the computational complexity of GPs in general, relying on sparse kernels and matrix factorization to enable efficient inversion of the covariance matrix (resulting from evaluating the kernel at the training examples). Specific to GP mapping, Kim and Kim~\cite{Kim_BCM,Kim_TRO,Kim2015} proposed the use of a sparse kernel and Bayesian Committee Machines to perform small GP regressions with subsets of training data before fusing them into the full map. Similarly, Wang and Englot~\cite{Wang_GPRegressionMapping} partition their sensor data amongst several GP regressions and then use Baysian Committee Machines to first combine the sensor-level regressions and then again to fuse the resulting regression with the map. The speed of this method is further improved with a new data-structure, the test-data octree, which improves access time and only considers correlations between grid cells within a specified range. Despite its advantages, this method does not capture the discrete nature of the measurements in its model and ignores some cross correlations in the grid.

%\begin{itemize}
%  \item Mapping via GP regression~\cite{Wang_GPRegressionMapping} using nested Bayesian Committee Machines. This does not capture the nature of the discrete measurements in the model and ignores some cross-correlations between the measurements in order to reduce the computational complexity.
%  \item propose test-data octrees to reduce the number of test data used in regression and optimize data retrieval
%  \item nested Bayesian committee machine which, after new sensor data is partitioned among several GP regressions, fuses the result and updates the map with greatly reduced complexity.
%\end{itemize}

%Here, we show that for the most commonly used covariance functions, the matrix C can be hierarchically factored into a product of block low-rank updates of the identity matrix, yielding an (nlog2n) algorithm for inversion. More importantly, we show that this factorization enables the evaluation of the determinant det(C), permitting the direct calculation of probabilities in high dimensions under fairly broad assumptions on the kernel defining K. Our fast algorithm brings many problems in marginalization and the adaptation of hyperparameters within practical reach using a single CPU core. The combination of nearly optimal scaling in terms of problem size with high-performance computing resources will permit the modeling of previously intractable problems. We illustrate the performance of the scheme on standard covariance kernels.

% What is our approach
Our key insight, compared to the existing work which uses GPs for metric mapping, is that the values of the latent function at a discrete set of points can be recovered exactly by a Kalman filter~\cite{kf_gp_connection}. More precisely, the state of the Kalman filter can capture the continuous values of the latent occupancy function evaluated at each cell of a discrete map, while a probit observation model can convert these latent values into boolean measurements indicating the occupancy of the corresponding cell. The significance of this observation is that a Kalman filter can avoid the cubic scaling with the number of data points (observations of free/occupied space), while providing essentially the same estimates of the mean and covariance of the map. Our second idea is that, although the use of a probit measurement model makes the posterior distribution of the filter non-Gaussian, the posterior can be approximated via a Gaussian based on a single forward pass through the measurement data. We prove that this method results in an efficient approximation of GP classification~\cite[Ch.3.6]{Rasmussen_2006}. We name the proposed approach an Occupancy Grid Filter (OGF) to emphasize that these two insights were inspired by filtering techniques. Our work makes the following contributions:
\begin{itemize}[nosep]
  \item We prove that the OGF is an accurate approximation of GP classification on a grid and show that in practice the classification error is negligible.
  \item The computational complexity of the OGF is constant in time while GP methods scale cubically with time as the number of measurements grows. This property makes the proposed occupancy grid filter suitable for online applications.
%  \item Given $T$ occupancy measurements, each of size $M$, updating a map of size $N$ using Gaussian Process classification scales cubically with the number of measurements, $O(M^3T^3 + M^2T^2N)$, while the proposed method reduces the complexity to linear in the number of measurements, $O(MN^2)$.
%  \item We evaluate the accuracy of the proposed method by comparing to ground truth and evaluate its practical use on several publicly available datasets.
\end{itemize}

\section{Problem Formulation}
\label{sec:problem formulation}
Consider a robot whose task is to build a geometric map of a static environment. Let $\mathcal{S} \subset \mathbb{R}^3$ be a bounded connected set representing the environment and let $o:\mathcal{S}\mapsto \crl{-1,1}$ be a function such that $o(\mathbf{s}) \coloneqq -1$, if $\mathbf{s} \in \mathcal{S}$ is unoccupied, and $o(\mathbf{s}) \coloneqq 1$, otherwise. We restrict attention to occupancy grid mapping~\cite[Ch.9]{ProbabilisticRoboticsBook}. More precisely, we consider a tessellation of $\mathcal{S}$ with an associated lattice $\mathcal{X} \subset \mathcal{S}$ and estimate the map occupancy $o(\mathbf{x})$ only at the locations $\mathbf{x} \in \mathcal{X}$. Suppose that the robot collects a sequence of occupancy measurements $Z_{0:t}$, e.g., via a laser scanner or a depth camera, along a known trajectory $(\mathbf{s}_{0:t},R_{0:t}) \subset \mathbb{R}^3 \rtimes \text{SO}(3)$. Each occupancy measurement $Z_t$ is a matrix containing the 3-D positions of the cells $\mathbf{x}$ observed by the sensor and their occupancy $o(\mathbf{x})$, i.e., $Z_t \coloneqq (X_t,\mathbf{y}_t)$, $\mathbf{x}_{i,t} \in \mathcal{X} \cap \text{FoV}(\mathbf{s}_t,R_t)$, $y_{i,t} \coloneqq o(\mathbf{x}_{i,t})$, and $\text{FoV}(\mathbf{s}_t,R_t) \subset \mathcal{S}$ is the sensor's field of view. 

\begin{problem*}
Given occupancy measurements $Z_{0:t}$, construct an estimate $\hat{o}_t \in \crl{-1,0,1}^{|\mathcal{X}|}$ of the true environment occupancy $\crl{o(\mathbf{x}) | \mathbf{x} \in \mathcal{X}}$.
\end{problem*}

Note that one more state, $0$, is added in order to allow the case that the occupancy of a cell is unknown.

\section{Gaussian Process Classification}
\label{sec:gaussian process}
In this section, we give a brief overview of GPs and how they are applied to the occupancy mapping problem. As defined in~\cite{Rasmussen_2006}, a GP is a collection of random variables, any finite number of which have a joint Gaussian distribution, represented as, incorporate 
\begin{equation*}
f(\mathbf{s}) \sim 
\mathcal{GP}\left(\hat{f}(\mathbf{s}), k(\mathbf{s}, \mathbf{s}')\right)
\end{equation*}
where $\hat{f}(\mathbf{s}) = \mathbb{E}(f(\mathbf{s}))$ is the mean of $f(\mathbf{s})$, and $k(\mathbf{s}, \mathbf{s}') = \mathbb{E}((f(\mathbf{s})-\hat{f}(\mathbf{s}))(f(\mathbf{s}')-\hat{f}(\mathbf{s}')))$ is a kernel function used to construct the covariance matrix.

In a binary classification problem, $f(\mathbf{s})$ cannot directly model the class labels because of their discontinuous nature. Instead, $f(\mathbf{s})$ is treated as a latent function with GP prior, which is then squashed through the probit function to model the likelihood of a measurement, i.e. $p(y=1|\mathbf{s})$ or $p(y=-1|\mathbf{s})$. The following two steps are required in a typical application of binary GP classification to mapping~\cite{gpom}.
%Applying the idea of binary GP classifiction to mapping, known as Gaussian process occupancy mapping \cite{gpom}, two steps including training and prediction are required.

\subsection{Training}
In the training step, the posterior
\begin{equation}
p(\mathbf{f}_t|Z_{0:t}) = 
\frac{p(\mathbf{y}_{0:t}|\mathbf{f}_t) p(\mathbf{f}_t|S_{0:t})}{p(\mathbf{y}_{0:t}|S_{0:t})}
\label{eq:posterior of latent values}
\end{equation}
is used to estimate the latent values, $\mathbf{f}_t\coloneqq f_{0:t}$, of all the captured samples. Note the slight abuse of notation, $Z_{0:t}$ in~\eqref{eq:posterior of latent values} should be more accurately represented as $Z_{0:t}\coloneqq (S_{0:t}, \mathbf{y}_{0:t})$ instead of $(X_{0:t}, \mathbf{y}_{0:t})$, i.e. the training samples can be taken at any points $\mathbf{s}\in \mathcal{S}$. In~\eqref{eq:posterior of latent values}, $p(\mathbf{f}_t|S_{0:t})\sim\mathcal{N}(0, K)$ is the prior of the latent values of all training samples, where $K$ is a covariance matrix constructed using a pre-defined kernel, $p(\mathbf{y}_{0:t}|S_{0:t})$ is a normalization factor independent of $\mathbf{f}_t$, and $p(\mathbf{y}_{0:t}|\mathbf{f}_t)$ is the joint likelihood function which can be represented as,
\begin{equation}
\label{eq:joint liklihood function}
p(\mathbf{y}_{0:t}|\mathbf{f}_{t}) = 
\prod_i p(y_i|f_i) = \prod_i \Phi(f_i;0,1)
\end{equation}
because the measurements are independent given the latent values. The posterior pdf in~\eqref{eq:posterior of latent values} cannot be obtained in closed form since $p(\mathbf{y}_{0:t}|\mathbf{f}_t)$ is not of normal distribution. There exist different methods to approximate the posterior. The Laplace approximation~\cite[Ch.3]{Rasmussen_2006} finds the mode of $p(\mathbf{f}_t|Z_{0:t})$ and the local negative inverse Hessian to construct a normal distribution that approximates $p(\mathbf{f}_t|Z_{0:t})$. Expectation Propagation (EP)~\cite[Ch.3]{Rasmussen_2006} replaces each term in~\eqref{eq:joint liklihood function} with a (unnormalized) normal distribution, and tries to approximate the true posterior with product of normals in an iterative manner. More details on EP can be found in Sec.~\ref{sec:comparison}.

\subsection{Prediction}
With the estimated latent values of the training samples, $\hat{\mathbf{f}}_t$, the latent values, $\mathbf{m}_t$, of a query position in $\mathcal{X}$ can be estimated as,
\begin{equation*}
\hat{\mathbf{m}}_t = K^* \cdot K^{-1}\hat{\mathbf{f}}_t
\end{equation*}
where $K^*$ represents the correlation between $\mathbf{m}$ and $\mathbf{f}$, which is constructed using the kernel function with query positions and the positions of the training samples. By squashing $\hat{\mathbf{m}}_t$ through the probit function, we are able to recover the occupancy of the grid,
\begin{equation}
\label{eq:convert to occupancy status}
\hat{o}(\mathbf{x}_i) = 
\begin{cases}
1,  & \Phi(m_i; 0, 1) > r_o \\
-1, & \Phi(m_i; 0, 1) < r_f \\
0,  & \text{otherwise}
\end{cases}
, \quad
\forall\mathbf{x}_i \in \mathcal{X}
\end{equation}
where $r_o$ and $r_f$ are the thresholds for determining occupied and free cells respectively.

\section{Occupancy Grid Filter}
\label{sec:context aware filter}
In this section, we propose a new solution to the mapping problem which we call the Occupancy Grid Filter. This method takes into account the spatial correlation and discontinuous nature of occupancy and, at the same time, has lower computational cost than GP classification methods. The OGF arises from two key ideas. First, instead of creating a continuous map which is then evaluated at query points, we preselect the query points and only maintain latent function values at those points. Second, we note that the training step of GP classification described in Sec.~\ref{sec:gaussian process} is a batch optimization which must be re-evaluated when new measurements become available. We replace this with an incremental, Kalman-filter-like approach which greatly reduces the computational complexity. %We describe our approach in greater detail below.
By defining a grid and abstracting each cell with its center position $\mathbf{x}_i$, we only need keep track of the latent values $\mathbf{m}$ of the query positions $\mathcal{X}\coloneqq \{\mathbf{x}_i\}$, where $\mathbf{m}$ can be thought as the state of the map. Following the idea of GP classification, the prior of the state is $\mathbf{m} \sim \mathcal{N}(\hat{\mathbf{m}}_0, \Sigma_0)$, where $\hat{\mathbf{m}}_0=\mathbf{0}$ and $\Sigma_0$ is constructed by applying a kernel function on $\mathcal{X}$, i.e. $\Sigma_{0,ij}\coloneqq k(\mathbf{x}_i,\mathbf{x}_j)$. Note by initializing the prior covariance with the kernel we are capturing the same correlation which would be captured by the associated GP. With the assumption that the measurement $\mathbf{z}_i\coloneqq(\mathbf{x}_i^\top, y_i)$ is within the pre-defined tessellation $\mathcal{X}$, the binary measurement model can be represented as
\begin{equation*}
y_i = h(\mathbf{x}_i) \in \{-1, 1\}
\end{equation*}
with the Bernoulli distribution,
\begin{equation*}
p(y_i|\mathbf{x}_i, m_i) = \Phi(y_i m_i;0, 1).
\end{equation*}
Assume that the distribution of the map follows $\mathbf{m} \sim \mathcal{N}(\hat{\mathbf{m}}_t, \Sigma_t)$ at time $t$. With a measurement for the $i^{\text{th}}$ cell at time $t+1$, the posterior distribution of the map can be computed based on Bayes' rule, i.e.,
\begin{equation*}
p(\mathbf{m}|Z_{0:t}) 
= \frac{\phi(\mathbf{m};\hat{\mathbf{m}}_t, \Sigma_t) 
\Phi(y_im_i;0, 1)}
{p(\mathbf{z}_{i,t+1})}
\end{equation*}
Since the likelihood function is a cumulative normal function, the posterior of $\mathbf{m}$ at $t+1$ is no longer normal. To ensure the map follows a normal pdf, the posterior is estimated by minimizing the following Kullback-Leibler (KL) divergence\footnote{The $\text{KL}$ operator is defined as $\text{KL}(p||q) = \int p(x)\log(p(x)/q(x))dx$. Note that the $\text{KL}$ operator is not associative. The order in~\eqref{eq:KL divergence for EP} is chosen so that~\eqref{eq:KL divergence for EP} can be solved analytically.}
\begin{equation}
\label{eq:KL divergence for context-aware filter}
\argmin_{\hat{\mathbf{m}}_{t+1}, \Sigma_{t+1}} 
\text{KL}
\Big(
\phi(\mathbf{m}; \hat{\mathbf{m}}_{t+1}, \Sigma_{t+1}) \,\big\|\,
p(\mathbf{m}|Z_{0:t})
\Big)
\end{equation}
The same objective function is used in~\cite{Ivanov_2015} to enable the context-aware filter to incorporate binary measurements. 
%We use the formulation of the discrete measurement update of the context-aware filter as the solution to~\eqref{eq:KL divergence for context-aware filter}.
We adapt the formulation of the discrete measurement update from the context-aware filter to solve~\eqref{eq:KL divergence for context-aware filter} for the mapping problem. The mean and covariance estimation for the map at $t+1$ is
%The solution to~\eqref{eq:KL divergence for context-aware filter} is given in as the discrete measurement update step of the context-aware filter. 
%Applied to the mapping problem, the mean and covariance estimation for the map at $t+1$ is,
\begin{equation}
\label{eq:context aware filter discrete udpate}
\begin{gathered}
\hat{\mathbf{m}}_{t+1} = 
\hat{\mathbf{m}}_t + y_i \cdot \Sigma_t \mathbf{v}_i 
\frac{\phi\left(y_i \cdot \frac{\mathbf{v}_i^\top \hat{\mathbf{m}}_t}{\sqrt{\mathbf{v}_i^\top \Sigma_t \mathbf{v}_i+1}}; 0, 1\right)}
{\eta_{t+1} \sqrt{\mathbf{v}_i^\top \Sigma_t \mathbf{v}_i+1}} \\
\Sigma_{t+1} = 
\Sigma_t-\left(\hat{\mathbf{m}}_{t+1}-\hat{\mathbf{m}}_t\right)\left(\hat{\mathbf{m}}_{t+1}-\hat{\mathbf{m}}_t\right)^\top - \\
y_i \cdot \Sigma_t \mathbf{v}_i \mathbf{v}_i^\top \Sigma_t^\top 
\frac{\phi\left(y_i \cdot \frac{\mathbf{v}_i^\top \hat{\mathbf{m}}_t}{\sqrt{\mathbf{v}_i^\top \Sigma_t \mathbf{v}_i+1}}; 0, 1\right) \left(\mathbf{v}_i^\top \hat{\mathbf{m}}_t\right)}
{\eta_{t+1}\left(\mathbf{v}_i^\top \Sigma_t \mathbf{v}_i+1\right)^{3/2}}
\end{gathered}
\end{equation}
where $\mathbf{v}_i$ is a selection vector containing all $0$s but a single $1$ corresponding to the $i^\text{th}$ cell, $\eta_{t+1}$ is the normalization factor,
\begin{equation*}
\eta_{t+1} = 
p(\mathbf{z}_{i,t+1}) = 
\Phi\left(y_i \cdot 
\frac{\mathbf{v}_i^\top \hat{\mathbf{m}}_t}
{\sqrt{\mathbf{v}_i^\top \Sigma_t \mathbf{v}_i+1}}; 0, 1\right)
\end{equation*}
The occupancy of each cell can then be recovered through~\eqref{eq:convert to occupancy status}. Unlike GP classification, the occupancy grid filter merges both training and prediction steps, and incrementally updates the map only keeping track of the latent values on the grid.

\section{Theoretical Properties of OGF}
\label{sec:comparison}
A mentioned in Sec.~\ref{sec:gaussian process}, GP classification cannot be solved in closed form. In this section we briefly describe one of the conventional iterative solutions, EP, in the context of the mapping problem. We then show that the occupancy grid filter introduced in Sec.~\ref{sec:context aware filter} is a streamwise approximation of EP and can solve the occupancy mapping problem with computation complexity linear in the number of measurements at each time step and quadratic in the size of the map. 
%Restricting to the the occupancy grid mapping problem, as has been shown in~\eqref{eq:posterior of latent values} and~\eqref{eq:joint liklihood function}, the posterior distribution of $\mathbf{m}$ cannot be obtained analytically. 
The idea of EP is to approximate the posterior distribution of $\mathbf{m}$, $p(\mathbf{m}|Z_{0:t})$, by
\begin{equation*}
q(\mathbf{m}|Z_{0:t}) = 
\frac{1}{\eta_{ep}} 
p(\mathbf{m}|X_{0:t}) 
\prod_i t_i(m_{i}|\tilde{\eta}_i, \tilde{\mu}_i, \tilde{\sigma}_i^2),
\end{equation*}
where $\eta_{ep}$ is a normalization factor and
\begin{equation*}
t_i(m_{i}|\tilde{\eta}_i, \tilde{\mu}_i, \tilde{\sigma}_i^2) = 
\tilde{\eta}_i \phi(m_{i}; \tilde{\mu}_i, \tilde{\sigma}_i^2),
\end{equation*}
is an unnormalized normal pdf. For convenience, we follow the naming convention in~\cite[Ch.3]{Rasmussen_2006} calling $\tilde{\mu}$, $\tilde{\sigma}$, and $\tilde{\eta}$ site parameters. Conceptually, EP approximates the original posterior distribution with a product of normals, which itself is also a normal pdf. The site parameters are estimated iteratively, and the final approximation, $q(\mathbf{m}|Z_{0:t})$, can be easily recovered by combining each term after convergence. At each iteration, the cavity distribution of $m_i$, defined as
\begin{equation*}
q_{-i}(m_{i}) 
= \int_{j, j\neq i} q\cdot t^{-1}_i d m_{j}
\sim \mathcal{N}(\mu_{-i}, \sigma_{-i}^2),
\end{equation*}
is found with $\mu_{-i}$ and $\sigma_{-i}$ called cavity parameters. Then the site parameters for $t_i$ are updated by minimizing the KL divergence between $q_{-i}(m_{i})t_i$ and $q_{-i}(m_{i})\Phi(y_i m_{i})$, i.e.
\begin{equation}
\label{eq:KL divergence for EP}
\argmin_{\tilde{\eta}_i, \tilde{\mu}_i, \tilde{\sigma}_i}
\text{KL} 
\Big(q_{-i}(m_{i})t_i \,\big\|\, 
q_{-i}(m_{i})\Phi(y_i m_{i};0,1)\Big)
\end{equation}
Since $q_{-i}(m_{i})t_i$ is in the exponential family, (\ref{eq:KL divergence for EP}) can be solved by matching the first two moments of the distributions. As given in~\cite[Ch.3]{Rasmussen_2006}, the solution to (\ref{eq:KL divergence for EP}) is,
\begin{equation}
\label{eq:expectation propagation solution}
\begin{gathered}
z_i = \frac{y_i\mu_{-i}}{\sqrt{1+\sigma_{-i}^2}}, 
\quad
\hat{\eta}_i = \Phi(z_i;0,1) \\
\hat{\mu}_i = 
\mu_{-i} + 
\frac{y_i\sigma^2_{-i}\phi(z_i;0,1)}
{\hat{\eta}_i\sqrt{1+\sigma_{-i}^2}} \\ 
\hat{\sigma}^2_i = 
\sigma^2_{-i} - 
\frac{\sigma^4_{-i}\phi(z_i;0,1)}{(1+\sigma_{-i}^2)\hat{\eta}_i} 
\left(z_i+\frac{\phi(z_i;0,1)}{\hat{\eta}_i}\right)
\end{gathered}
\end{equation}
and,
\begin{equation}
\label{eq:expectation propagation solution cont}
\begin{gathered}
\tilde{\sigma}^2_i = 
\left(\hat{\sigma}_i^{-2} - \sigma_{-i}^{-2}\right)^{-1}, 
\quad
\tilde{\mu}_i = 
\tilde{\sigma}^2_i \left(\hat{\sigma}_i^{-2}\hat{\mu}_i-
\sigma_{-i}^{-2}\mu_{-i}\right) \\
\tilde{\eta}_i = 
\hat{\eta}_i
\sqrt{2\pi\left(\sigma^2_{-i}+\tilde{\sigma}^2_i\right)} 
\exp\left(
\frac{\left(\mu_{-i}-\tilde{\mu}_i\right)^2}
{2\left(\sigma^2_{-i}+\tilde{\sigma}^2_i\right)}
\right)
\end{gathered}
\end{equation}
In (\ref{eq:expectation propagation solution}), $\hat{\mu}_i$ and $\hat{\sigma}^2_i$ are the first and second moments of the distribution  $p_{-i}(m_{i})\Phi(y_i m_{i};0,1)$. A complete description of EP can be found in Algorithm~\ref{alg:expectation propagation}.

\begin{algorithm}[t]
\caption{Expectation Propagation}
\label{alg:expectation propagation}
\begin{algorithmic}[0]
\State Initialize $\tilde{\mu}_i = 0$, $\tilde{\sigma}_i=\infty$, and $\tilde{\eta}_i = 1$.
\Repeat
\ForAll{i}
\State Find the cavity parameters for $m_i$ with $q_{-i}(m_{i})$.
\State Update the site parameters for $t_i$ using \eqref{eq:expectation propagation solution} and \eqref{eq:expectation propagation solution cont}.
\State Update the mean and covariance for $q(\mathbf{m}|Z_{0:t})$.
\EndFor
\Until{convergence}
\end{algorithmic}
\end{algorithm}

%The discrete measurement update step of the context-aware filter shares the similar idea with EP. With a new binary measurment at the $i^{th}$ grid, at time step $t+1$, the true posterior of $\mathbf{m}$ is,
%\begin{equation*}
%p(\mathbf{m}_{t+1}|\mathbf{z}_{i, t+1}) 
%= \frac{\phi(\mathbf{m};\hat{\mathbf{m}}_t, \Sigma_t) 
%\Phi(y_im_i)}
%{\eta_{t+1}}
%\end{equation*}
%Again, because of the intractability in computing the true posterior, a new normal pdf $\mathbf{m} \sim \mathcal{N}(\hat{\mathbf{m}}_{t+1}, \Sigma_{t+1})$ is used to approximate $p(\mathbf{m}_{t+1}|\mathbf{z}_{i, t}) $, which is found by solving,
%\begin{equation}
%\label{eq:KL divergence for context-aware filter}
%\argmin_{\hat{\mathbf{m}}_{t+1}, \Sigma_{t+1}} 
%\text{KL}
%\Big(
%\phi(\mathbf{m}; \hat{\mathbf{m}}_{t+1}, \Sigma_{t+1}) \| p(\mathbf{m}|\mathbf{z}_t)
%\Big)
%\end{equation}
%The solution to \eqref{eq:KL divergence for context-aware filter} is already given in \eqref{eq:context aware filter discrete udpate}. Note the subtle difference between \eqref{eq:KL divergence for EP} and \eqref{eq:KL divergence for context-aware filter}. In EP, the KL divergence is minimized using the marginal distribution of $m_i$, while the joint distribution of $\mathbf{m}$ is used in the context-aware filter. In the next section, the relationship between the two method is discussed.

\subsection{Accuracy}
\begin{theorem}
\label{thm:objective functions}
Given a prior of a random variable $\mathbf{m}$ with $\mathbf{m} \sim \mathcal{N}(\bm{\mu}, \Sigma)$ and a measurement $y_i\in\{-1,1\}$ with Bernoulli likelihood $y_i \sim \mathcal{B}(\Phi(y_i m_i;0,1))$. If the posterior of $\mathbf{m}$, i.e. $\phi(\mathbf{m}; \bm{\mu}, \Sigma)\Phi(y_i m_i;0,1)$ (the normalization factor is ignored here and onwards in this theorem), is to be approximated by another normal pdf $\mathbf{m} \sim \mathcal{N}(\hat{\bm{\mu}}, \hat{\Sigma})$, the following two approaches will give the same results.
\begin{equation}
\label{eq:context aware filter objective function}
\min_{\hat{\bm{\mu}}_c, \hat{\Sigma}_c} 
\text{KL}
\Big(
\phi(\mathbf{m}; \hat{\bm{\mu}}_c, \hat{\Sigma}_c) \,\big\|\,
\phi(\mathbf{m}; \bm{\mu}, \Sigma)\Phi(y_i m_i;0,1) 
\Big)
\end{equation}
and,
\begin{equation}
\label{eq:expectation propagation objective function}
\begin{aligned}
\min_{\tilde{\mu}_i, \tilde{\sigma}_i} 
\text{KL}
\Bigg(
&\int_{m_{j, j\neq i}}
\phi(\mathbf{m}; \bm{\mu}, \Sigma)
\phi(m_i; \tilde{\mu}_i, \tilde{\sigma}^2_i) dm_j \,\Big\|\, 
\\
&\int_{m_{j, j\neq i}}
\phi(\mathbf{m}; \bm{\mu}, \Sigma)
\Phi(y_i m_i;0, 1) dm_j
\Bigg)
\end{aligned}
\end{equation}
i.e.
\begin{equation*}
\phi(\mathbf{m}; \hat{\bm{\mu}}_c, \hat{\Sigma}_c) \sim 
\phi(\mathbf{m}; \bm{\mu}, \Sigma)
\phi(m_i; \tilde{\mu}_i, \tilde{\sigma}^2_i)
\end{equation*}
\end{theorem}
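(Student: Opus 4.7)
The plan is to reduce both objectives to moment-matching problems and then exploit a common rank-one structural form of the resulting Gaussian update. As asserted in the discussion leading to (\ref{eq:expectation propagation solution}) and in the derivation of (\ref{eq:context aware filter discrete udpate}), both KL minimizations are solved by matching the first two moments of a Gaussian to the appropriate target. Concretely, (\ref{eq:context aware filter objective function}) sets $(\hat{\bm{\mu}}_c,\hat{\Sigma}_c)$ equal to the mean and covariance of the normalized joint target $\phi(\mathbf{m};\bm{\mu},\Sigma)\Phi(y_im_i;0,1)/\eta$, while (\ref{eq:expectation propagation objective function}), after integrating $m_{j\neq i}$ out of both sides, becomes a scalar moment-matching problem for $(\tilde{\mu}_i,\tilde{\sigma}_i^2)$ with the Gaussian cavity $\phi(m_i;\mu_i,\Sigma_{ii})$ dropping out as a common factor and the tilted marginal $\Phi(y_im_i;0,1)\phi(m_i;\mu_i,\Sigma_{ii})/\eta$ as the target.

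Next I would compute the joint moments on the OGF side by conditioning on $m_i$. Under the prior, $\mathbf{m}\mid m_i$ is Gaussian with mean $\bm{\mu}+\Sigma\mathbf{v}_i(m_i-\mu_i)/\Sigma_{ii}$ and constant covariance $\Sigma-\Sigma\mathbf{v}_i\mathbf{v}_i^\top\Sigma/\Sigma_{ii}$; since the likelihood factor $\Phi(y_im_i;0,1)$ depends only on $m_i$, it merely reweights the prior marginal at coordinate $i$. The tower rule then yields $\hat{\bm{\mu}}_c=\bm{\mu}+\Sigma\mathbf{v}_i(\bar{m}_i-\mu_i)/\Sigma_{ii}$ and the law of total covariance yields $\hat{\Sigma}_c=\Sigma-\Sigma\mathbf{v}_i\mathbf{v}_i^\top\Sigma(\Sigma_{ii}-\bar{\sigma}_i^2)/\Sigma_{ii}^2$, where $\bar{m}_i,\bar{\sigma}_i^2$ are the mean and variance of the tilted marginal. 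On the EP side, completing the square in $m_i$ shows that $\phi(\mathbf{m};\bm{\mu},\Sigma)\phi(m_i;\tilde{\mu}_i,\tilde{\sigma}_i^2)$ is a joint Gaussian with mean $\bm{\mu}+\Sigma\mathbf{v}_i(\tilde{\mu}_i-\mu_i)/(\Sigma_{ii}+\tilde{\sigma}_i^2)$ and covariance $\Sigma-\Sigma\mathbf{v}_i\mathbf{v}_i^\top\Sigma/(\Sigma_{ii}+\tilde{\sigma}_i^2)$.

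Both formulas thus lie in the identical two-parameter family $\bm{\mu}+\alpha\,\Sigma\mathbf{v}_i,\ \Sigma-\beta\,\Sigma\mathbf{v}_i\mathbf{v}_i^\top\Sigma$, and the scalars $(\alpha,\beta)$ are uniquely pinned down by the induced $i$th marginal. Approach one matches this marginal to $(\bar{m}_i,\bar{\sigma}_i^2)$ by construction, while approach two does so via the one-dimensional identities $\mu_i+\Sigma_{ii}(\tilde{\mu}_i-\mu_i)/(\Sigma_{ii}+\tilde{\sigma}_i^2)=\bar{m}_i$ and $\Sigma_{ii}\tilde{\sigma}_i^2/(\Sigma_{ii}+\tilde{\sigma}_i^2)=\bar{\sigma}_i^2$, which are precisely the scalar moment-matching equations for the product $\phi(m_i;\tilde{\mu}_i,\tilde{\sigma}_i^2)\phi(m_i;\mu_i,\Sigma_{ii})$ against the tilted marginal. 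Hence the two updates produce the same $(\alpha,\beta)$ and the two joint Gaussians coincide. I expect the main obstacle to be the careful verification that matching only the $i$th coordinate marginal is enough to match the full joint covariance, including the $(j,k)$ blocks with $j,k\neq i$; this is exactly what the rank-one factorization in $\Sigma\mathbf{v}_i\mathbf{v}_i^\top\Sigma$ guarantees, and in turn rests on the conditional-Gaussian identity invoked to compute $\hat{\Sigma}_c$. A minor secondary subtlety, as flagged in the theorem statement, is that the KL targets are unnormalized; this only rescales the optima by a constant independent of $(\hat{\bm{\mu}}_c,\hat{\Sigma}_c)$ and so does not affect the matched moments.
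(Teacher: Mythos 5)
Your proof is correct, and it reaches the conclusion by a genuinely different route than the paper. The paper's proof is computational: it takes the solution of \eqref{eq:context aware filter objective function} as given in the form \eqref{eq:context aware filter discrete udpate} (citing the context-aware filter), recognizes the EP-side approximant $\phi(\mathbf{m};\bm{\mu},\Sigma)\,\phi(m_i;\tilde{\mu}_i,\tilde{\sigma}_i^2)$ as a Kalman update with scalar linear observation $\mathbf{v}_i^\top\mathbf{m}\sim\mathcal{N}(\tilde{\mu}_i,\tilde{\sigma}_i^2)$ as in \eqref{eq:kalman filter measurement update}, and then in Appendix~\ref{sec:complementary proof for theorem 1} substitutes the closed-form probit site parameters \eqref{eq:expectation propagation solution}--\eqref{eq:expectation propagation solution cont} and verifies term by term that $\Delta\bm{\mu}_e$ and $\Delta\Sigma_e$ reproduce \eqref{eq:context aware filter discrete udpate}. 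You instead argue structurally: because the likelihood factor depends only on $m_i$, the tower rule and the law of total covariance express the joint tilted moments affinely in the tilted marginal moments $(\bar{m}_i,\bar{\sigma}_i^2)$ along the rank-one directions $\Sigma\mathbf{v}_i$ and $\Sigma\mathbf{v}_i\mathbf{v}_i^\top\Sigma$; the EP-side joint lies in the same two-parameter family, and its $i$th marginal is matched to the same $(\bar{m}_i,\bar{\sigma}_i^2)$, so the two scalars $(\alpha,\beta)$ --- hence the two joint Gaussians --- must coincide (given $\Sigma_{ii}>0$). This buys generality and transparency: your argument never touches the probit-specific integrals, so it establishes the equivalence for any site likelihood depending on a single coordinate, and it makes explicit why matching one scalar marginal pins down the entire covariance update, which is exactly the step the paper's term-by-term algebra obscures; what the paper's computation buys in exchange is the explicit update equations themselves. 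One shared caveat worth flagging: both you and the paper treat the stated objectives, written as $\text{KL}(\text{approx}\,\|\,\text{target})$, as solved by moment matching, which is strictly the minimizer of $\text{KL}(\text{target}\,\|\,\text{approx})$; since you adopt this convention consistently on both sides of the equivalence, as the paper does, the theorem's conclusion is unaffected.
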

\begin{proof}
Recall that \eqref{eq:context aware filter objective function} is in the same form of \eqref{eq:KL divergence for context-aware filter}, which is the objective function of the occupancy grid filter at the measurement update step. Therefore, the solutions for $\hat{\bm{\mu}}_c$ and $\hat{\Sigma}_c$ follows the form of \eqref{eq:context aware filter discrete udpate}.

One critical observation for \eqref{eq:expectation propagation objective function} is that the approximated distribution $\phi(\mathbf{m}; \bm{\mu}, \Sigma)
\phi(m_i; \tilde{\mu}_i, \tilde{\sigma}^2_i)$ can be thought of as the product of a Gaussian prior and a measurement with a normal likelihood. The measurement model is this case is linear and can be represented as,
\begin{equation*}
h_i(\mathbf{m}) = \mathbf{v}_i^\top \mathbf{m} \sim \mathcal{N}(\tilde{\mu}_i, \tilde{\sigma}^2_i),
\end{equation*}
where $\mathbf{v}_i$ is defined similarly as in~\eqref{eq:context aware filter discrete udpate}. Then the resulting posterior distribution $\mathcal{N}(\hat{\bm{\mu}}_e, \hat{\Sigma}_e)$ can be readily computed by following the measurement update step of a Kalman filter,
\begin{equation}
\label{eq:kalman filter measurement update}
\begin{gathered}
K = \Sigma\mathbf{v}_i 
\left(\mathbf{v}_i^\top \Sigma \mathbf{v}_i + 
\tilde{\sigma}^2_i\right)^{-1} \\
\hat{\bm{\mu}}_e = \bm{\mu} + 
K\left(\tilde{\mu}_i-\mathbf{v}_i^\top\bm{\mu}\right) \\
\hat{\Sigma}_e = (I-K\mathbf{v}_i^\top) \Sigma
\end{gathered}
\end{equation}
Using the results for $\tilde{\mu}_i$ and $\tilde{\sigma}_i$ in~\eqref{eq:expectation propagation solution} and~\eqref{eq:expectation propagation solution cont},
\begin{equation*}
\hat{\bm{\mu}}_c = \hat{\bm{\mu}}_e, \quad 
\hat{\Sigma}_c = \hat{\Sigma}_e
\end{equation*}
i.e. the two optimization methods result in the same approximation for posterior of $\mathbf{m}$. The details for computing $\hat{\bm{\mu}}_e$ and $\hat{\Sigma}_e$ can be found in Appendix \ref{sec:complementary proof for theorem 1}.
\end{proof}

\begin{corollary}
The proposed method is a streamwise approximation of Expectation Propagation to solve Gaussian process occupancy mapping, where ``streamwise'' means processing the measurements sequentially without iterations.
\end{corollary}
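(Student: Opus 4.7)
The plan is to leverage Theorem~\ref{thm:objective functions} directly, which already establishes that a single OGF measurement update and a single EP site update produce the same Gaussian approximation. What remains is to connect this one-step equivalence to the stream-wise claim over the whole measurement sequence.

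First, I would formalize what a streamwise (single-pass) variant of EP actually does. In the standard EP loop of Algorithm~\ref{alg:expectation propagation}, when site $t_i$ is revisited the cavity distribution $q_{-i}$ is formed by dividing out the \emph{previously stored} site term $t_i$. In a single-pass run, however, each site is visited exactly once, and at that moment $t_i$ is still at its initialization $\tilde{\mu}_i=0$, $\tilde{\sigma}_i=\infty$, $\tilde{\eta}_i=1$, i.e., it is non-informative. Hence dividing by this term is a no-op and the cavity distribution $q_{-i}$ simply coincides with the current posterior approximation.

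Second, I would invoke Theorem~\ref{thm:objective functions}. At the moment OGF processes the $(t+1)$-st measurement, its current state is $\mathbf{m}\sim\mathcal{N}(\hat{\mathbf{m}}_t,\Sigma_t)$, and the OGF update~\eqref{eq:context aware filter discrete udpate} solves the KL problem~\eqref{eq:KL divergence for context-aware filter}, which is exactly in the form~\eqref{eq:context aware filter objective function} with $\bm{\mu}=\hat{\mathbf{m}}_t$, $\Sigma=\Sigma_t$. Theorem~\ref{thm:objective functions} then identifies this with the solution of~\eqref{eq:expectation propagation objective function}, i.e., with the EP site update whose cavity distribution equals $\mathcal{N}(\hat{\mathbf{m}}_t,\Sigma_t)$. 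Combined with the observation above, this is precisely the first (and only) EP site update for the incoming measurement under a one-pass schedule.

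Finally, I would close the induction by chaining: the OGF prior $\mathcal{N}(\mathbf{0},\Sigma_0)$ matches EP's initial product of non-informative sites against the GP prior, and each subsequent OGF step replicates the streamwise EP update on the next measurement. Therefore OGF coincides, step by step, with EP executed without iteration, which is the definition of a streamwise approximation. I do not expect a real technical obstacle here since Theorem~\ref{thm:objective functions} carries all the analytical weight; the only subtlety worth articulating carefully is why OGF's use of the current posterior as the ``prior'' for the next measurement corresponds to EP's cavity distribution, and that reduces to the elementary fact that an uninitialized site term is non-informative and therefore leaves the posterior unchanged when divided out.
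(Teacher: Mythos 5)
Your proposal is correct and follows essentially the same route as the paper: an induction that matches the OGF prior $\mathcal{N}(\mathbf{0},\Sigma_0)$ to EP's initial non-informative sites and then invokes Theorem~\ref{thm:objective functions} at each step to identify the OGF update with the single-pass EP site update. Your explicit justification that the cavity distribution coincides with the current posterior because the uninitialized site term is non-informative is a point the paper leaves implicit, but it is the same argument.
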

\begin{proof}
We prove this corollary by applying Mathematical Induction. The initial prior for OGF is $\mathbf{m} \sim \mathcal{N}(\hat{\mathbf{m}}_0, \Sigma_0)$ where $\hat{\mathbf{m}}_0=\mathbf{0}$ and $\Sigma_0 = K$ is constructed by the kernels as in Gaussian processes. The initial approximation provided EP is also $\mathbf{m} \sim \mathcal{N}(\mathbf{0}, K)$, since $t_i=1$ for all $i$ given the initial settings in Algorithm~\ref{alg:expectation propagation}. Assume after processing the $i^\text{th}$ measurement at time step $t$, both OGF and EP have an approximation for the posterior of $\mathbf{m}$ with $\mathbf{m} \sim \mathcal{N}(\hat{\mathbf{m}}_t, \hat{\Sigma}_t)$. With the $(i+1)^\text{th}$ measurement at time $t+1$, the OGF and EP generates new approximations for the posterior with $\mathbf{m} \sim \mathcal{N}(\hat{\mathbf{m}}_{t+1, c}, \hat{\Sigma}_{t+1, c})$ and $\mathbf{m} \sim \mathcal{N}(\hat{\mathbf{m}}_{t+1, e}, \hat{\Sigma}_{t+1, e})$ by solving~\eqref{eq:KL divergence for EP} and~\eqref{eq:KL divergence for context-aware filter} respectively. Note the correspondence between~\eqref{eq:KL divergence for EP} and~\eqref{eq:expectation propagation objective function}, as well as~\eqref{eq:KL divergence for context-aware filter} and~\eqref{eq:context aware filter objective function}. A direct application of Theorem~\ref{thm:objective functions} shows that $\phi(\mathbf{m};\hat{\mathbf{m}}_{t+1, c}, \hat{\Sigma}_{t+1, c}) = \phi(\mathbf{m};\hat{\mathbf{m}}_{t+1, e}, \hat{\Sigma}_{t+1, e})$. Therefore, OGF is a streamwise approximation of the EP algorithm.
\end{proof} 

Conceptually, EP can be considered as a smoothing method requiring multiple iterations to converge. The proposed OGF is its corresponding filter approach which process each measurement only once. Assume both EP\footnote{As has been mentioned in~\cite{Rasmussen_2006}, EP is not guaranteed to converge. However, EP shows good convergence property in practical application.} and OGF converge in the sense that with new measurements, the estimated mean $\hat{\mathbf{m}}$ and uncertainty $\hat{\Sigma}$ no longer changes. Based on~\eqref{eq:context aware filter discrete udpate}, \eqref{eq:expectation propagation solution}, and \eqref{eq:expectation propagation solution cont}, this happens if and only if for every cell in the map, $\phi(m_i;0, 1) = 0$, which implies all cells are well classified. In practice, $m_i$ may converge to different values for EP and OGF because the gradient of the normal pdf approaches $0$ out of the three deviations. However, the difference should not affect the final classification result since $\Phi(m_i;0, 1)$ is almost at its extreme values when $\phi(m_i;0,1) \rightarrow 0$. This behavior will be more clear with the experiment results in the 2-D simulation environment in Sec.~\ref{subsec:2d simulation}.

\subsection{Complexity}
Representing the map size as $N$, we consider the computational complexity at time step $T$ with the assumption that a constant number of measurements, $M$, is taken at each time step. Then we have the following remark about the computation complexity.
\begin{remark}
For a single time step, the computation complexities for Laplace approximation, EP, and OGF are $\mathcal{O}(s_n M^3 T^3+N M^2 T^2)$, $\mathcal{O}(s_e M^3 T^3+N M^2 T^2)$, and $\mathcal{O}(M N^2)$ respectively.
\end{remark}
The cubic complexity of Laplace approximation comes from the inversion of the Hessian in Newton's method, and $s_n$ is the average number of iterations for Newton's method to converge. For EP, the cubic complexity results from the iterative update where the update for the site parameters of each $t_i$ has complexity $\mathcal{O}(M^2T^2)$. For OGF, no prediction step is required. The computation complexity only results from the measurement update. A direct observation is that, unlike conventional methods, the computation complexity of our approach does not scale with time. Also, in practical applications, the total number of measurements over time is usually in the same order with the dimension of the map. In this case, OGF has much lower computation complexity compared to Laplace approximation and EP.

\section{Experimental Results}
\label{sec:experimental results}
Two kinds of experimental results are shown in this section. In Section \ref{subsec:2d simulation}, we compare the accuracy of OGF with EP and the occupancy grid mapping algorithm~\cite{occgrid} in a 2-D simulated environment. In Section \ref{subsec:real world experiment}, we compare the maps built using OGF and Octomap~\cite{octomap} using 3-D point cloud measurements from a Velodyne. In both simulated and real world experiments, we used the standard normal pdf as the kernel function, i.e.
\begin{equation*}
k(\mathbf{x}, \mathbf{x}') = 
\phi(\|\mathbf{x}-\mathbf{x}'\|_2; 0, \sigma^2),
\quad
\mathbf{x},\mathbf{x}'\in\mathcal{X}
\end{equation*}
 
\subsection{2-D Simulation Environment}
\label{subsec:2d simulation}
We compare the proposed OGF with EP and occupancy grid mapping algorithm in a 2-D simulated environment with known ground truth. Noise-free samples were taken randomly from the map without repetition. The 2-D ground truth map and an example of $300$ random samples are shown in Figure~\ref{fig:2d simulation setup}.

\begin{figure}[htp]
\centering
\begin{subfigure}[b]{0.45\linewidth}
\includegraphics[width=\textwidth]{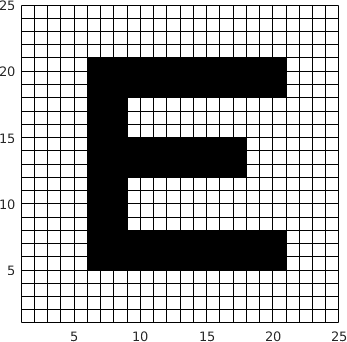}
\caption{•}
\label{subfig:2d ground truth map}
\end{subfigure}
\,
\begin{subfigure}[b]{0.45\linewidth}
\includegraphics[width=\textwidth]{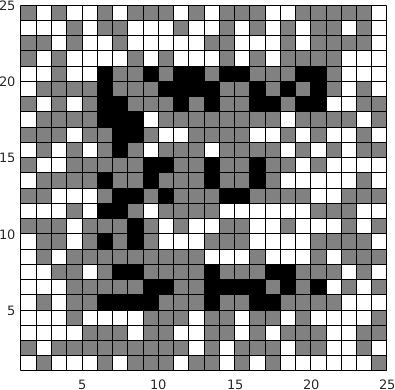}
\caption{•}
\label{subfig:300 sample example}
\end{subfigure}
\caption{(a) Ground truth map of dimension $25\times 25$ used in the 2-D simulation. (b) 300 samples were taken randomly from the map in (a) without repetition.}
\label{fig:2d simulation setup}
\end{figure}

\begin{figure*}[t]
\centering
\begin{subfigure}[b]{0.22\textwidth}
\includegraphics[width=\textwidth]{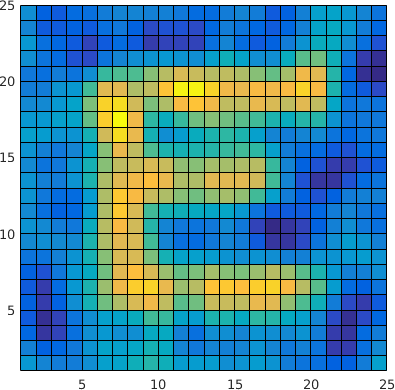}
\caption{•}
\end{subfigure}
\begin{subfigure}[b]{0.22\textwidth}
\includegraphics[width=\textwidth]{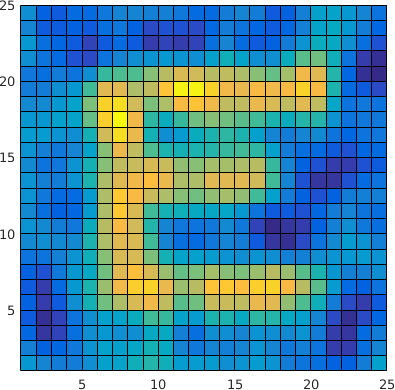}
\caption{•}
\end{subfigure}
\begin{subfigure}[b]{0.22\textwidth}
\includegraphics[width=\textwidth]{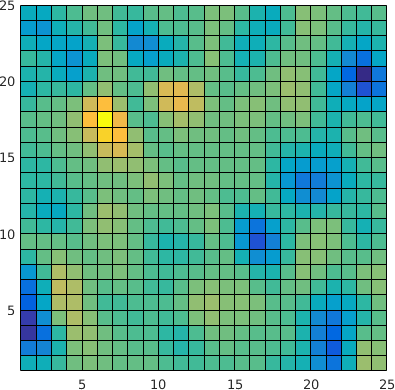}
\caption{•}
\label{subfig:map diff with 300 samples}
\end{subfigure}
\begin{subfigure}[b]{0.25\textwidth}
\includegraphics[width=\textwidth]{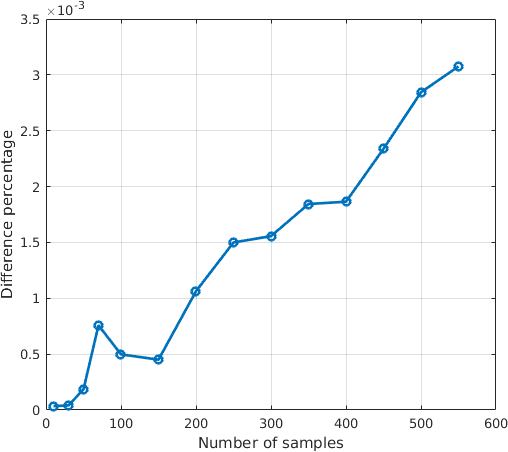}
\caption{•}
\label{subfig:map difference vs sample number}
\end{subfigure}
\caption{(a) Map difference percentage $d(\hat{\mathbf{m}}_{c}, \hat{\mathbf{m}}_{e})$, as defined in \eqref{eq:map difference metric}, with respect to the number of samples. (b) Map estimated with EP. (c) Map estimated OGF. (d) Difference between the two estimated maps. Both of the maps were estimated using the $300$ training samples shown in Figure \ref{subfig:300 sample example} with the standard deviation of the normal pdf kernel set to $1$ grid unit.}
\label{fig:estimated map and map diff with 300 samples}
\end{figure*}

\begin{figure*}[t]
\centering
\begin{subfigure}[b]{0.25\textwidth}
\includegraphics[width=\textwidth]{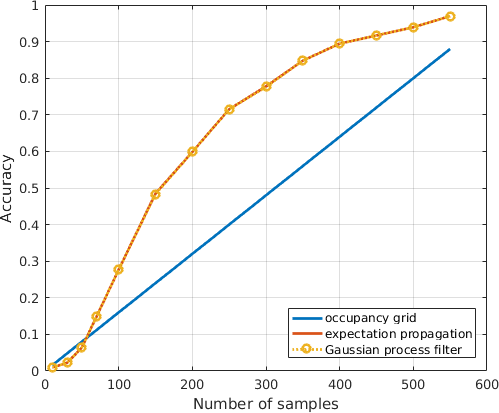}
\caption{•}
\label{subfig:accuracy comparison between cf, ep and og}
\end{subfigure}
\begin{subfigure}[b]{0.22\textwidth}
\includegraphics[width=\textwidth]{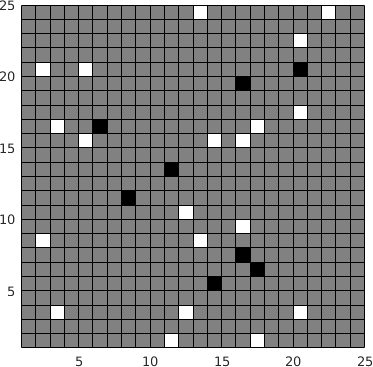}
\caption{•}
\end{subfigure}
\begin{subfigure}[b]{0.22\textwidth}
\includegraphics[width=\textwidth]{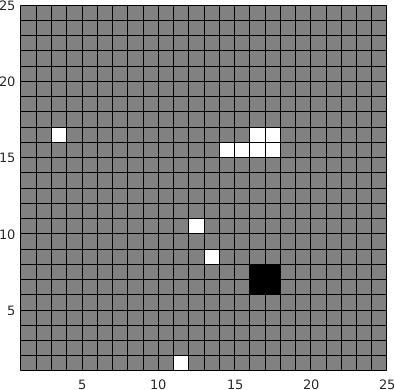}
\caption{•}
\end{subfigure}
\begin{subfigure}[b]{0.22\textwidth}
\includegraphics[width=\textwidth]{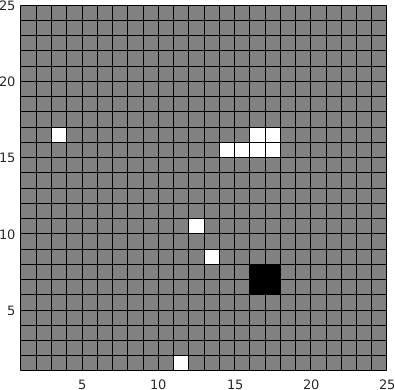}
\caption{•}
\end{subfigure}
\\
\begin{subfigure}[b]{0.22\textwidth}
\includegraphics[width=\textwidth]{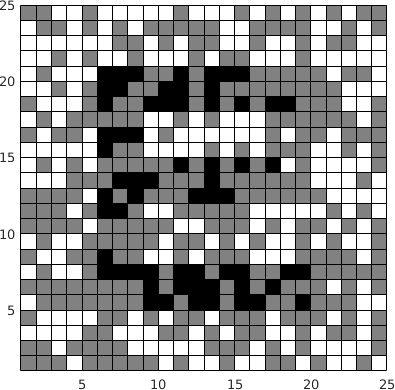}
\caption{•}
\end{subfigure}
\begin{subfigure}[b]{0.22\textwidth}
\includegraphics[width=\textwidth]{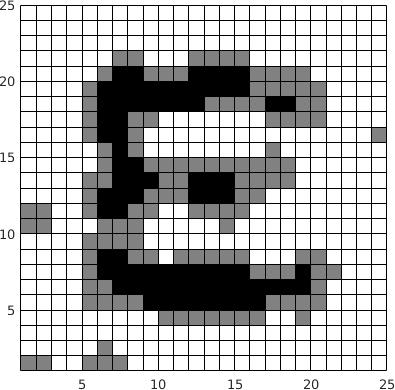}
\caption{•}
\end{subfigure}
\begin{subfigure}[b]{0.22\textwidth}
\includegraphics[width=\textwidth]{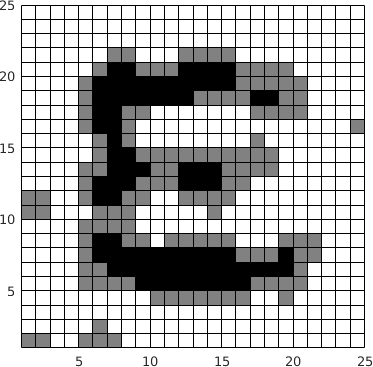}
\caption{•}
\end{subfigure}
\caption{(a) The accuracy, as defined in \eqref{eq:classification accuracy}, of the map generated by the occupancy grid mapping algorithm, EP, and OGF. (b), (c), and (d) are the maps generated by the three algorithms with $30$ samples. (e), (f), and (g) are the maps generated by the same algorithms with $300$ samples. For EP and OGF, $r_o$ and $r_f$ were set to $0.65$ and $0.35$ respectively.}
\label{fig:map of cf, ep and og with 30 and 300 samples}
\end{figure*}

Figure~\ref{fig:estimated map and map diff with 300 samples} shows the map estimated by both EP and OGF, as well as the difference between the two maps, with the $300$ measurements shown in Figure \ref{subfig:300 sample example}. As can be observed directly in Figure \ref{subfig:map diff with 300 samples}, the differences were mainly concentrated at the well-classified regions where the absolute latent values are relatively large. This echoes our analysis in Section \ref{sec:comparison}. 

We used~\eqref{eq:map difference metric} to convert the difference of the latent values into a scalar metric, which is the difference as a percentage of the map estimated by EP.
\begin{equation}
\label{eq:map difference metric}
d(\hat{\mathbf{m}}_{c}, \hat{\mathbf{m}}_{e}) = 
\frac{\|\hat{\mathbf{m}}_{c}-\hat{\mathbf{m}}_{e}\|_2}{\|\hat{\mathbf{m}}_{e}\|_2}
\end{equation}
Figure~\ref{subfig:map difference vs sample number} shows $d(\hat{\mathbf{m}}_{c}, \hat{\mathbf{m}}_{e})$ with respect to the number of measurements. Based on the plot, the difference $d(\hat{\mathbf{m}}_{c}, \hat{\mathbf{m}}_{e})$ was generally less than $0.04\|\hat{\mathbf{m}}_{e}\|_2$. It can also be observed that $d(\hat{\mathbf{m}}_{c}, \hat{\mathbf{m}}_{e})$ increased with the number of measurements, which was due to the fact that more cells in the map should be well-classified as the measurement increases. As shown in the later examples, this minor error in latent values does not affect the classification results.

Figure \ref{subfig:accuracy comparison between cf, ep and og} shows the accuracy of the map generated by the occupancy grid mapping algorithm, EP, and OGF with the accuracy defined as,
\begin{equation}
\label{eq:classification accuracy}
acc = \frac{1}{|\mathcal{X}|} 
\sum_i \mathbf{1}(\hat{o}(\mathbf{x}_i) = o(\mathbf{x}_i)),
\quad
\mathbf{x}_i\in\mathcal{X}
\end{equation}
where $|\mathcal{X}|$ is the map size, while $o(\mathbf{x})$ and $\hat{o}(\mathbf{x})$ are defined in Sec.~\ref{sec:problem formulation} and~\ref{sec:gaussian process}. It should be noted that occupancy grid mapping was trivial in this case; since the measurements were noise-free, the map generated by the occupancy grid algorithm was the same as marking sample positions as whatever the measurements were.

As can be seen from Figure~\ref{subfig:accuracy comparison between cf, ep and og}, the map generated using the proposed method has the same accuracy as EP for various numbers of measurements. This supports our previous claim that the minor difference in latent value estimation does not affect the classification. It is also indicated in Figure~\ref{subfig:accuracy comparison between cf, ep and og} that both EP and OGF have lower accuracy than the occupancy grid mapping when there are few samples, but outperform it once the samples get reasonably dense. The reason for this is that sparse measurements will not alter the latent value of a cell far from its prior mean so the occupancy status of most of the cells remain unknown. However, when the density of samples increases the OGF is able to infer the occupancy of the cells in the neighborhood of the measured samples. The required density depends on the type and the parameters of the kernel in use. Figure~\ref{fig:map of cf, ep and og with 30 and 300 samples} shows the map generated by the three algorithms with $30$ and $300$ samples respectively. 

\subsection{Real World Data}
\label{subsec:real world experiment}
\begin{figure*}[t]
\centering
\begin{subfigure}[b]{0.32\textwidth}
\includegraphics[width=\textwidth]{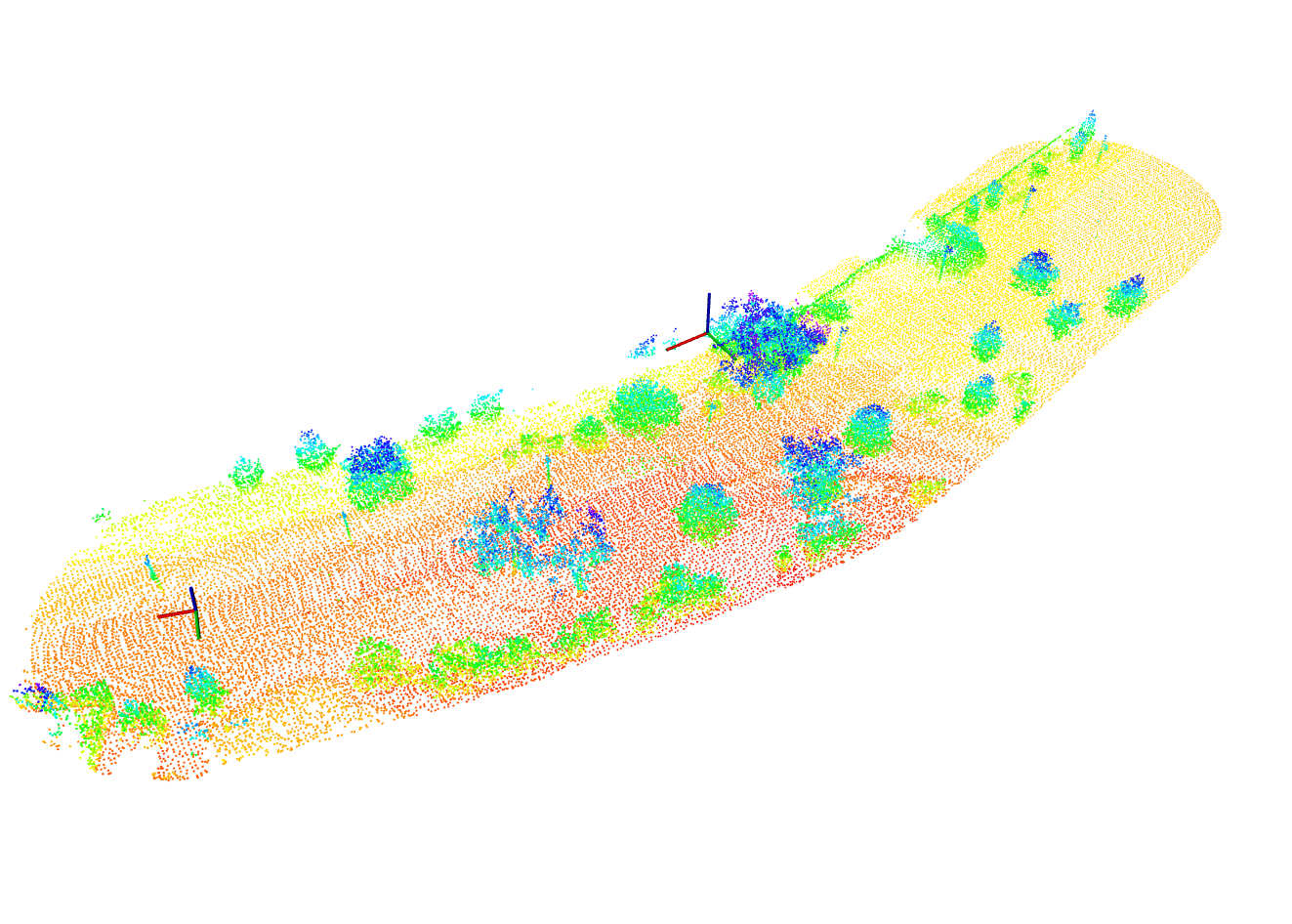}
\caption{•}
\end{subfigure}
\begin{subfigure}[b]{0.32\textwidth}
\includegraphics[width=\textwidth]{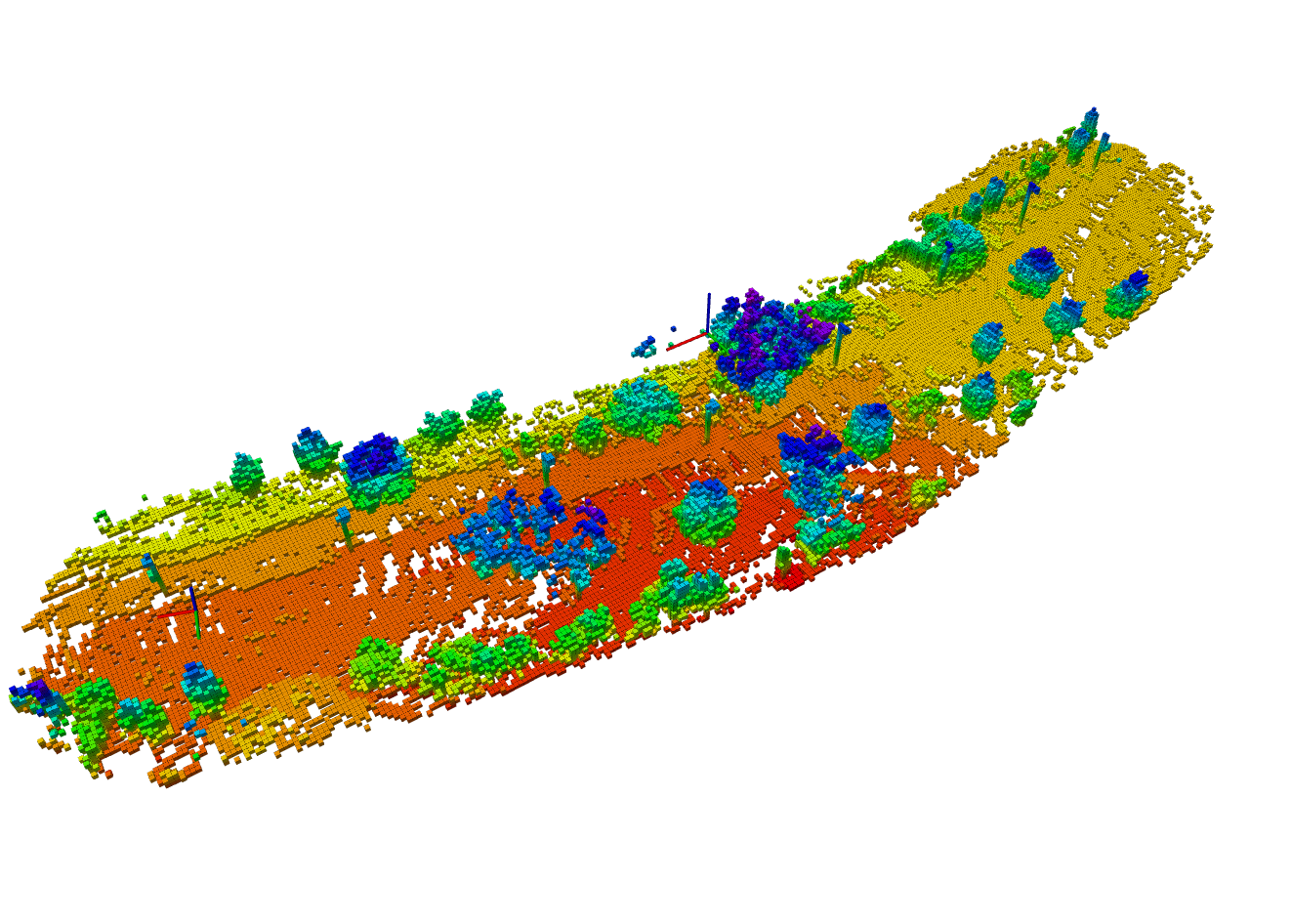}
\caption{•}
\end{subfigure}
\begin{subfigure}[b]{0.32\textwidth}
\includegraphics[width=\textwidth]{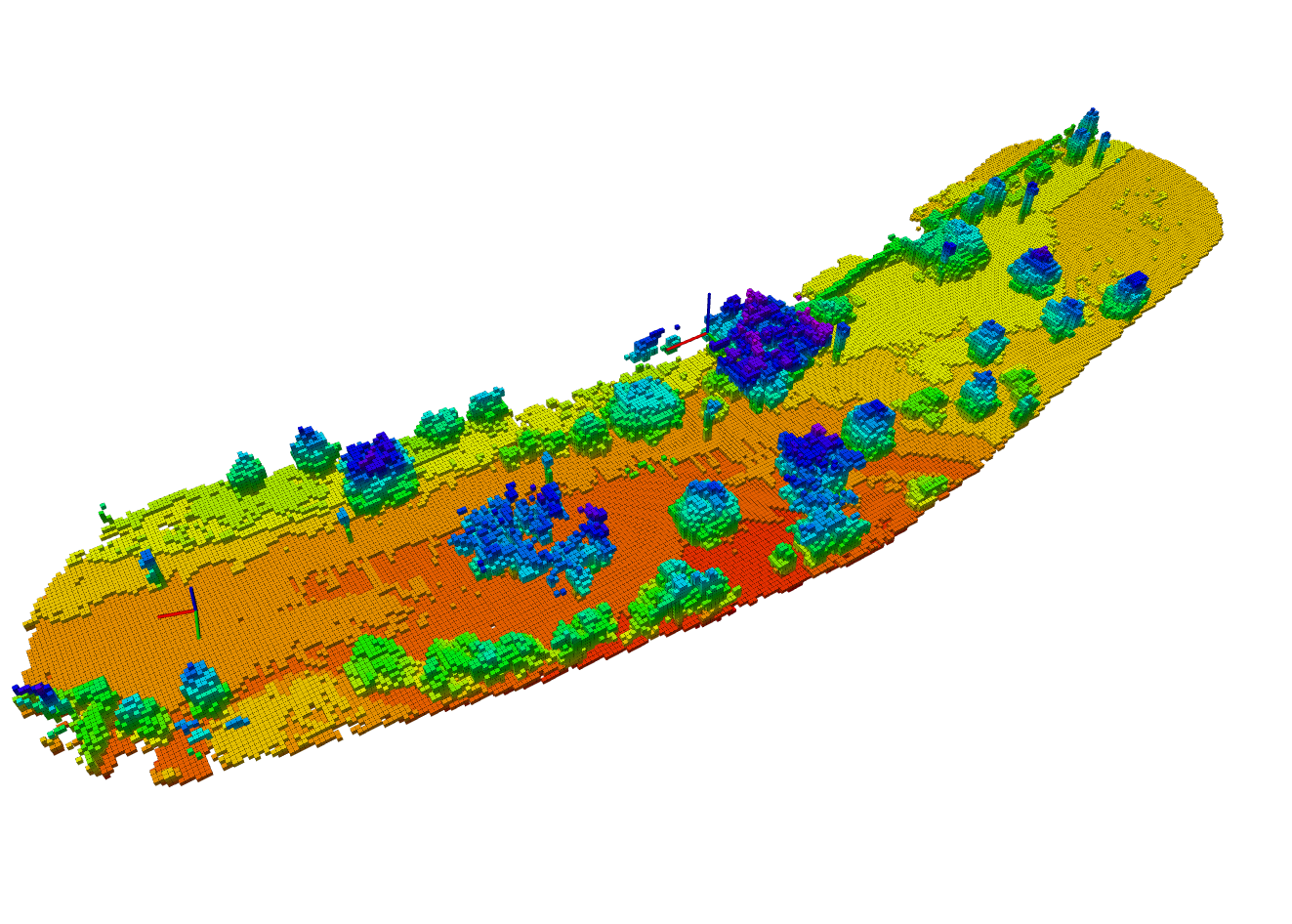}
\caption{•}
\end{subfigure}
\\
\begin{subfigure}[b]{0.32\textwidth}
\includegraphics[width=\textwidth]{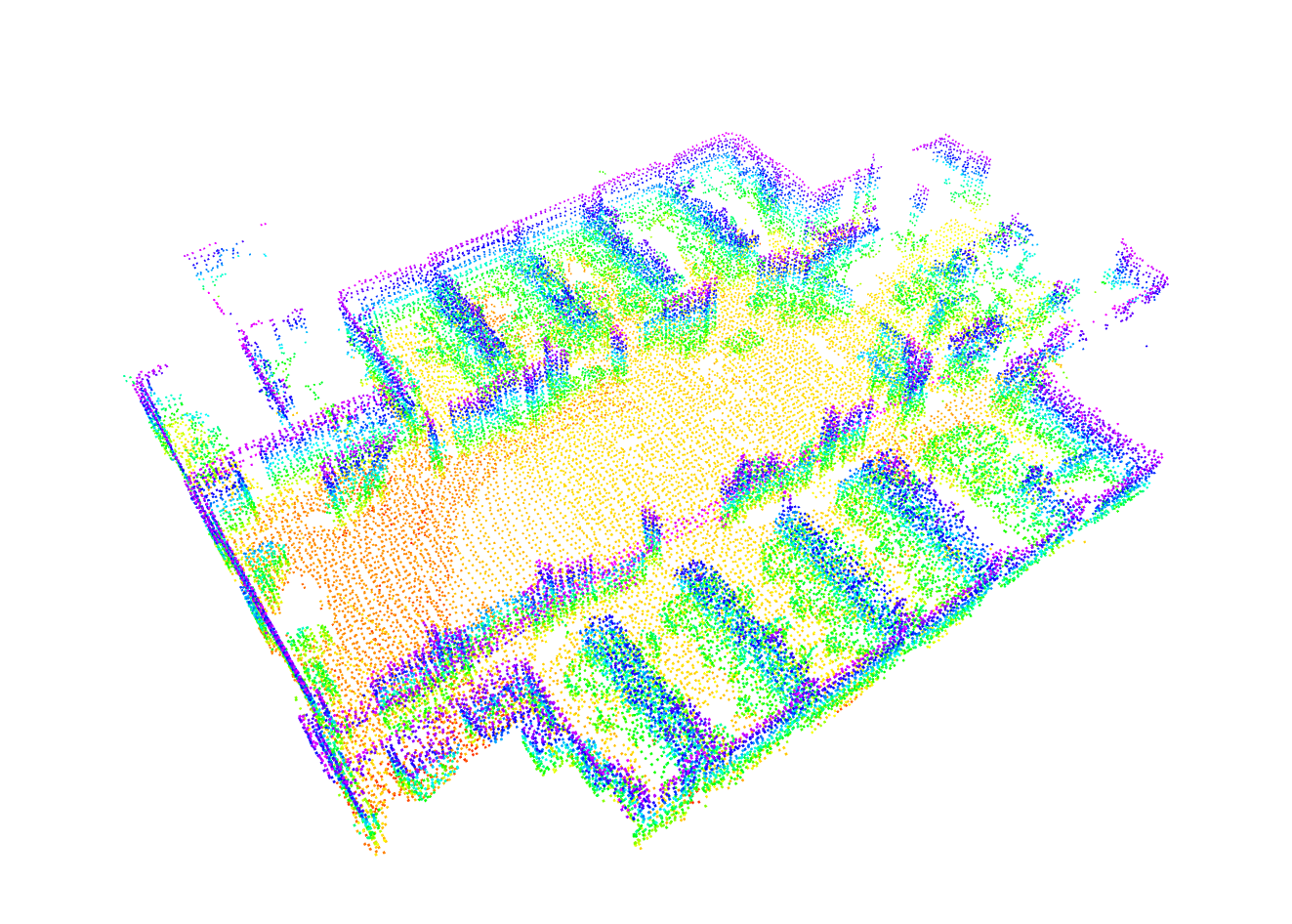}
\caption{•}
\end{subfigure}
\begin{subfigure}[b]{0.32\textwidth}
\includegraphics[width=\textwidth]{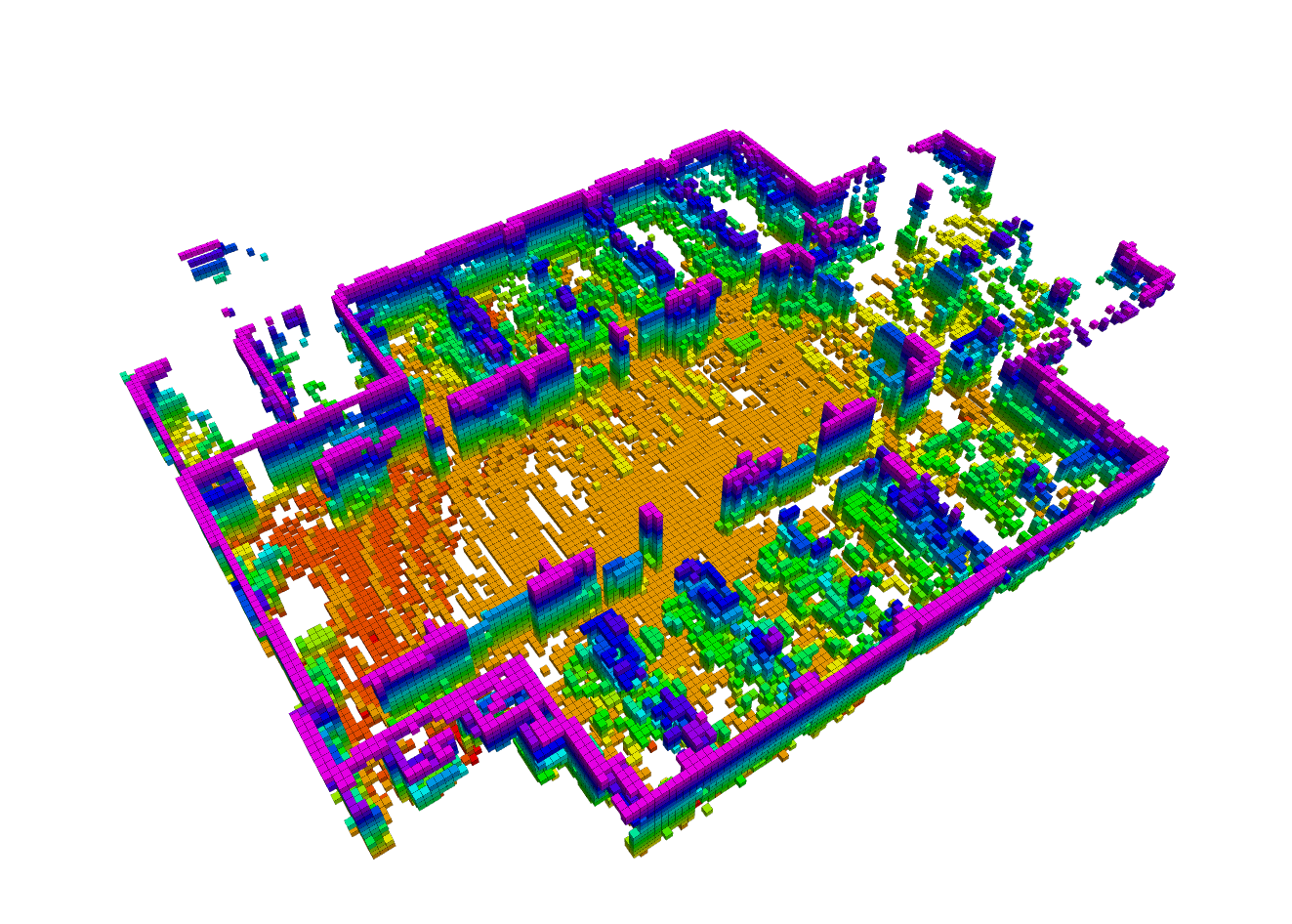}
\caption{•}
\end{subfigure}
\begin{subfigure}[b]{0.32\textwidth}
\includegraphics[width=\textwidth]{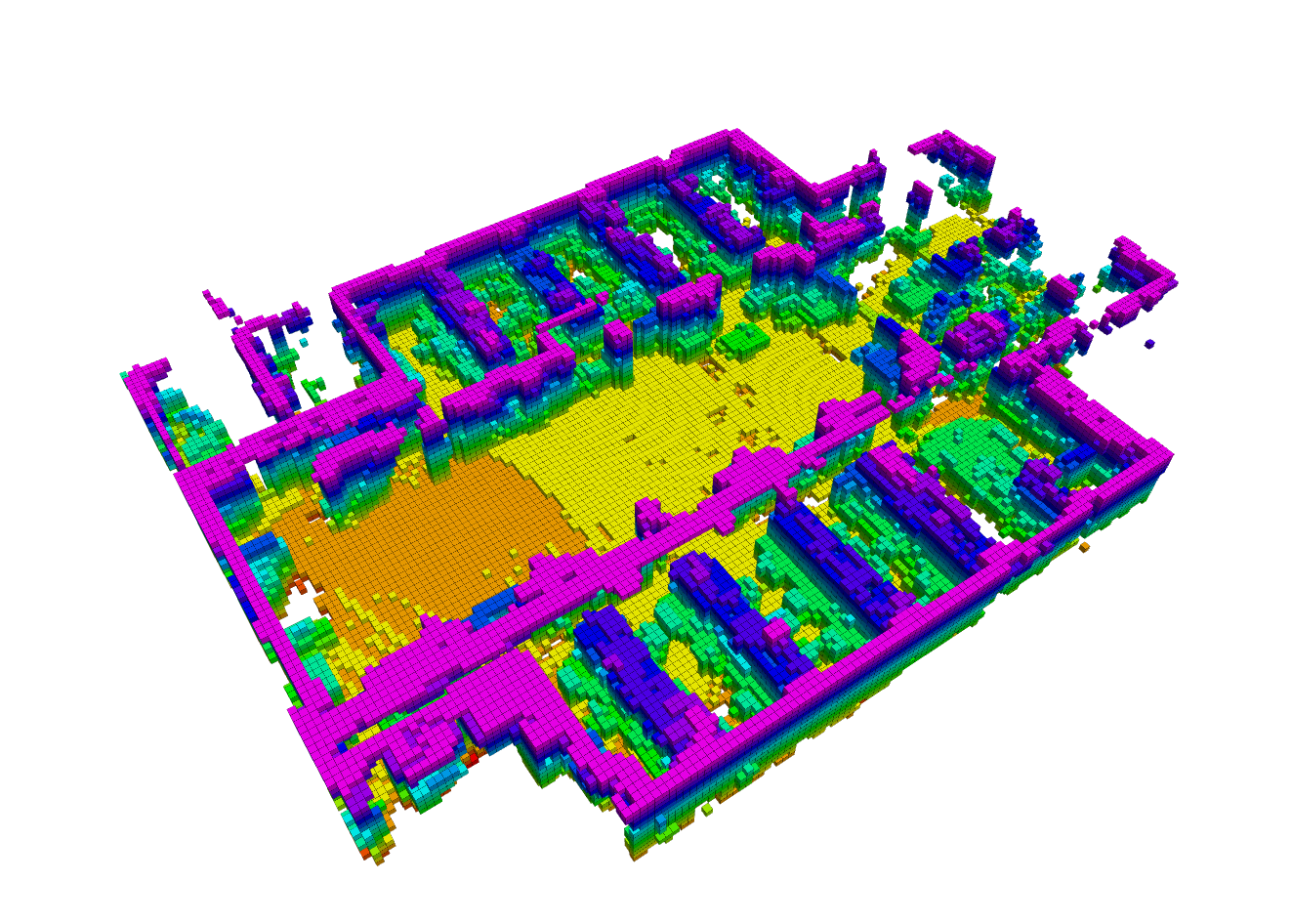}
\caption{•}
\end{subfigure}
\caption{The first column is the raw point cloud directly registered using the Velodyne measurements. The point cloud was filtered with the PCL voxel grid filter with $0.5m$ leaf size. The second and third columns are the occupied cells in the map generated by the Octomap and OGF respectively. The dataset of the first row was taken in a outdoor environment with dimension $150m\times 60m \times 8m$. In the outdoor dataset, the cell size used for both Octomap and the OGF was $0.4m$. The second row shows a dataset from an indoor environment with dimension $35m\times 30m\times 2m$ and cell size $0.2m$. In both datasets, the standard deviation used in the kernel function was defined as half the size of the corresponding cell, i.e. $0.2m$ and $0.1m$. The threshold values, $r_o$ and $r_f$, were set to $0.65$ and $0.35$ respectively.}
\label{fig:comparison between cf and octomap}
\end{figure*}

%\footnote{www.ros.org}
%\footnote{velodynelidar.com/vlp-16.html}

In order to evaluate the effectiveness of OGF with real world data we implemented the proposed algorithm using C++ with ROS\footnote{www.ros.org}. A Velodyne Puck (VLP-16)\footnote{velodynelidar.com/vlp-16.html} was used to provide 3-D point cloud measurements. The lidar-based odometry method (LOAM)~\cite{zhang2014loam} was used to estimate the position and orientation of the sensor.

To enable the algorithm to deal with the dense point clouds provided by Velodyne, we applied the following downsampling method that reduced the number of samples taken at each measurement update step, but at the same time kept all the samples uniformly distributed across the 3-D space. For each lidar ray, we used the ray tracing algorithm introduced in~\cite{amanatides1987fast} to identify which cells that ray has travelled through. A measurement was only taken in a cell if no measurement had been taken within this cell before. As has been mentioned in \cite{octomap}, a cell measured as free by one lidar ray may be reported as occupied in others due to the discretization of the 3-D space. This effect is especially obvious when the map resolution is coarse. To overcome this problem, we prioritized occupied measurements of a cell over free measurements of the same cell, i.e a cell previously measured as free could be re-measured as occupied, but the opposite was not allowed.

Figure~\ref{fig:comparison between cf and octomap} shows a comparison between the map generated by Octomap and OGF in both outdoor and indoor environments. In the experiment with real world data, OGF demonstrated its capability of generating a denser map than Octomap by filling the gaps, which is especially obvious comparing the reconstructed ground.

\section{Conclusion}
\label{sec:conclusion}
This paper presented an incremental approach for solving the occupancy estimation problem. We proved that the proposed Occupancy Grid Filter is a streamwise approximation of EP, one of the conventional methods to solve GP classification, with constant computation complexity in time. With the 2-D simulation environment, we demonstrated that OGF generates almost the same classification results as EP, albeit with minor differences in the latent values only for the well-classified cells. Also, the accuracy outperforms the occupancy grid mapping algorithm with reasonable number of measurements. We also evaluated the performance of OGF on real world data, and compared the results with Octomap. It was demonstrated that, because of the more accurate representation of the uncertainty of the map, OGF is able to fill the gaps and produce a map that is reasonably denser.

As the computation complexity of our method is quadratic in the map size, for the future work, we would like to explore the possibility of developing a map representation with an adaptive resolution that is compatible with the current framework. Furthermore, we would like to understand if the covariance information provided by OGF, which drops the independence assumption between the cells in the grid, can be a better guidance for autonomous exploration and mapping compared to the classical occupancy grid map.

\appendices
\section{Complementary proof for Theorem 1}
\label{sec:complementary proof for theorem 1}
In this section, we complete the intermediate steps of \eqref{eq:kalman filter measurement update}, which is the measurement update step of a Kalman filter with prior $\mathbf{m} \sim \mathcal{N}(\bm{\mu}, \Sigma)$ and a linear measurement model $h_i(\mathbf{m}) = \mathbf{v}_i^\top \mathbf{m} \sim \mathcal{N}(\tilde{\mu}_i, \tilde{\sigma}^2_i)$. Define,
\begin{equation*}
\begin{gathered}
\Delta \bm{\mu}_e \coloneqq \hat{\bm{\mu}}_e - \bm{\mu},
\qquad
\Delta \Sigma_e \coloneqq \hat{\Sigma}_e - \Sigma
\end{gathered}
\end{equation*}
Then change of mean is,
\begin{equation*}
\Delta \bm{\mu}_e
= \Sigma \mathbf{v}_i \cdot 
\frac{\tilde{\mu}_i-\mu_i}{\sigma_i^2+\tilde{\sigma}^2_i}
\end{equation*}
Using the results from \eqref{eq:expectation propagation solution} and \eqref{eq:expectation propagation solution cont}, $\Delta \bm{\mu}_e$ can be computed as in \eqref{eq:Kalman filter update of mean}. Note that $\mu_i$ and $\sigma_i$ are used in \eqref{eq:Kalman filter update of mean} to represent the cavity parameters instead of $\mu_{-i}$ and $\sigma_{-i}$. 
\begin{equation}
\label{eq:Kalman filter update of mean}
\begin{aligned}
\Delta \bm{\mu}_e
&= \Sigma \mathbf{v}_i \cdot 
\frac{\tilde{\sigma}^2_i 
\left(\hat{\sigma}_i^{-2}\hat{\mu}_i -\sigma^{-2}_i\mu_i\right) - \mu_i}
{\sigma_i^2+\tilde{\sigma}^2_i} \\
&= \Sigma \mathbf{v}_i \cdot 
\frac{\hat{\sigma}_i^{-2}\hat{\mu}_i -\sigma^{-2}_i\mu_i - \tilde{\sigma}^{-2}_i\mu_i}
{\sigma_i^2 \tilde{\sigma}^{-2}_i+1} \\
&= \Sigma \mathbf{v}_i \cdot 
\frac{\hat{\sigma}_i^{-2}\hat{\mu}_i - \left(\sigma^{-2}_i + \tilde{\sigma}^{-2}_i\right) \mu_i}
{\sigma_i^2 \left(\hat{\sigma}_i^{-2}-\sigma_i^{-2}\right) +1} \\
&= \Sigma \mathbf{v}_i \cdot 
\frac{\hat{\sigma}_i^{-2}\hat{\mu}_i - \hat{\sigma}_i^{-2}\mu_i}
{\sigma_i^2\hat{\sigma}_i^{-2}} \\
&= \Sigma \mathbf{v}_i \cdot 
\frac{\hat{\mu}_i - \mu_i}{\sigma_i^2} \\
&= \Sigma \mathbf{v}_i \cdot 
\frac{y_i \phi(z_i;0, 1)}{\Phi(z_i;0,1)\sqrt{1+\sigma_i^2}}
\end{aligned} 
\end{equation}
Similarly, the change of covariance is,
\begin{equation*}
\label{eq:Kalman filter update of covariance}
\begin{aligned}
\Delta \Sigma_e
&= -\Sigma \mathbf{v}_i \mathbf{v}_i^\top \Sigma \cdot 
\frac{1}{\sigma_i^2+\tilde{\sigma}^2_i} \\
&= -\Sigma \mathbf{v}_i \mathbf{v}_i^\top \Sigma \cdot 
\frac{\tilde{\sigma}^{-2}_i}{\sigma_i^2 \tilde{\sigma}^{-2}_i + 1} \\
&= -\Sigma \mathbf{v}_i \mathbf{v}_i^\top \Sigma \cdot 
\frac{\hat{\sigma}_i^{-2}-\sigma_i^{-2}}
{\sigma_i^2 \left(\hat{\sigma}_i^{-2}-\sigma_i^{-2}\right) + 1} \\
&= -\Sigma \mathbf{v}_i \mathbf{v}_i^\top \Sigma \cdot 
\frac{\hat{\sigma}_i^{-2}-\sigma_i^{-2}}
{\sigma_i^2 \hat{\sigma}_i^{-2}} \\
&= -\Sigma \mathbf{v}_i \mathbf{v}_i^\top \Sigma \cdot 
\frac{\phi(z_i;0,1)}{(1+\sigma_i^2)\Phi(z_i;0,1)} 
\left(z_i+\frac{\phi(z_i;0,1)}{\Phi(z_i;0,1)}\right) \\
&= -\Sigma \mathbf{v}_i \mathbf{v}_i^\top \Sigma \cdot 
\frac{y_i\mu_i\cdot\phi(z_i;0,1)}{(1+\sigma_i^2)^{3/2}\Phi(z_i;0,1)} - 
\Delta \bm{\mu}_e \Delta\bm{\mu}_e^\top
\end{aligned}
\end{equation*}
The change of mean and covariance matches the measurement update step in the occupancy grid filter in \eqref{eq:context aware filter discrete udpate}.

% Bibliography
%==================================================================%
\bibliographystyle{IEEEtran}
\bibliography{ref.bib}
%==================================================================%

\end{document}